\newtheorem{theorem}{Theorem}
\newtheorem{definition}{Condition}
\title{Distributed Estimation of the Operating State of a Single-Bus DC MicroGrid without an External Communication Interface}
\name{Marko Angjelichinoski$^{*}$, Anna Scaglione$^{\dagger}$, Petar Popovski$^{*}$ and \v Cedomir Stefanovi\'c$^{*}$\thanks{The work presented in this paper was supported in part by EU, under grant agreement no. 607774 ``ADVANTAGE''.}}
\address{{\small$^{*}$Dept. of Electronic Systems, Aalborg Univ., Denmark, $^{\dagger}$School of ECEE, Arizona State Univ., AZ, USA}\\{\small Emails: \texttt{maa@es.aau.dk,anna.scaglione@asu.edu,petarp@es.aau.dk,cs@es.aau.dk} }}
\begin{document}
%
\maketitle
\begin{abstract}
We propose a decentralized Maximum Likelihood solution for estimating the stochastic renewable power generation and demand in single bus Direct Current (DC) MicroGrids (MGs), with high penetration of droop controlled power electronic converters.
The solution relies on the fact that the primary control parameters are set in accordance with the local power generation status of the generators.
Therefore, the steady state voltage is inherently dependent on the generation capacities and the load, through a non-linear parametric model, which can be estimated.
To have a well conditioned estimation problem, our solution avoids the use of an external communication interface and utilizes controlled voltage disturbances to perform distributed training. 
Using this tool, we develop an efficient, decentralized Maximum Likelihood Estimator (MLE) and formulate the sufficient condition for the existence of the globally optimal solution.
The numerical results illustrate the promising performance of our MLE algorithm.
\end{abstract}
\begin{keywords}
MicroGrid, droop, training, MLE
\end{keywords}
\section{Introduction}
\label{sec:intro}

Low Voltage Direct Current (LVDC) MicroGrids (MGs) are gaining popularity due to the flexibility of the control, the absence of reactive power component and the easy integration with emerging renewable generation technologies \cite{ref1,ref2,ref3}.
They use \emph{power electronic converters} to interface the Distributed Energy Resources (DERs) with the LVDC distribution infrastructure.
The converters implement a set of \emph{control} algorithms, organized in a hierarchy that consists of \emph{primary}, \emph{secondary} and \emph{tertiary} levels \cite{ref2,ref4,ref5,ref6}.
The primary controller regulates the steady state bus voltage, keeping the balance between the supplied power and the load demand.
It is commonly implemented via the \emph{droop} control law in decentralized configuration, where each controller uses only the local output current to control the voltage as the load varies \cite{ref2,ref4,ref6}.
On the other hand, the secondary/tertiary controllers, which perform 
various system optimization procedures, require regular updates of the power generation status of remote DERs and the current load \cite{ref2}.
Traditionally, a communication network is used to send those updates, which increases the complexity and the implementation cost of the MG system, as well as makes its reliability dependent on an external system \cite{ref1,ref2,ref3}.

Motivated by the shortcomings of external communication systems, in this paper we propose a novel framework for single-bus DC MGs, which enables DER units to estimate the power generation capacities of all other DERs, as well as the load power demand by relying solely on the capabilities of power electronic controllers.
The proposed framework enables DERs to learn the \emph{operating state} of the MG that can be used by various control applications, such as optimal economic dispatch \cite{ref7,ref8}, optimal power flow \cite{ref9} and market optimizations \cite{ref10}.
Since each DER learns the status of all remote units, the optimal control decisions can be made without any further coordination, allowing for a fully \emph{decentralized} control architecture.

The proposed solution reuses the existing primary control interface and it does not require any additional hardware and/or external communication support.
Its main principle of operation exploits the fact that the configuration of the \emph{primary droop controller} makes the steady state bus voltage functionally dependent on the variable generation capacities of the DERs and on the current value of the load.
Specifically, the \emph{virtual admittance} control parameter of each droop controller is nominally set to be proportional to the power capacity of the DER, while the feedback loop is closed via the output current of the unit that varies with changes in the load \cite{ref11,ref12,ref13}.
Thus, the steady state voltage can be described through a non-linear model, parametrized by the DER generation capacities and the load power demand.
To make the parameters of the model identifiable, the controllers, for a \emph{limited} time period, simultaneously switch between different operating points of the droop control following predetermined patterns called \emph{training sequences}.
These sequences are \emph{a priori} known to all controllers and they cause deviations of the steady state voltage, which are observed locally and enable each DER to apply decentralized Maximum Likelihood Estimation (MLE).
The specific contributions of this paper can be summarized as follows: (i) formulation of the estimation problem based on the model of the primary droop control, (ii) identification of the sufficient condition for unique identifiability of the MG configuration (i.e., the states of the DERs and of the load), and (iii) solution to the MLE problem.
We also illustrate the potential of the proposed framework in an example MG system.

The rest of the paper is organized as follows.
This section is concluded with a brief review of the related work.
Section \ref{sec:DCMGModel} introduces the system model and formulates the problem of estimating renewable generation and the load.
Section \ref{sec:MLE} presents the MLE as the main result of the paper.
Section \ref{sec:Results} presents the numerical results and section \ref{sec:conc} briefly discusses our ongoing work on the topic.

\textbf{Related Work}: Switching between different operating points of the converters was previously used for active impedance estimation for the Thevenin equivalent model\cite{ref14,ref15,ref16}.
However, this application is confined to very simple scenarios where only a single converter disturbs the state of the bus and only one parameter gets estimated.
In our framework, multiple controllers deviate primary control parameters in order to ``train'' the system simultaneously, extracting significantly more information about the system state.
Finally, we note that the proposed framework is in line of recent works in MGs \cite{ref1,ref2,ref17,ref18,ref19,ref20,ref21,ref22} that suggest to avoid installing a separate cyber infrastructure to support the control architecture, due to reliability/availability concerns and installation costs.
Instead, recent advances advocate to use the signal processing potential residing in the power electronic converter measurements and control circuits, allowing for a fully self-contained MG implementation \cite{ref20,ref21,ref22}.

\section{System Model and Problem Formulation}
\label{sec:DCMGModel}

\subsection{Primary Control and Steady State Characterization}

\begin{figure}[t]
\centering
\includegraphics[scale=0.3]{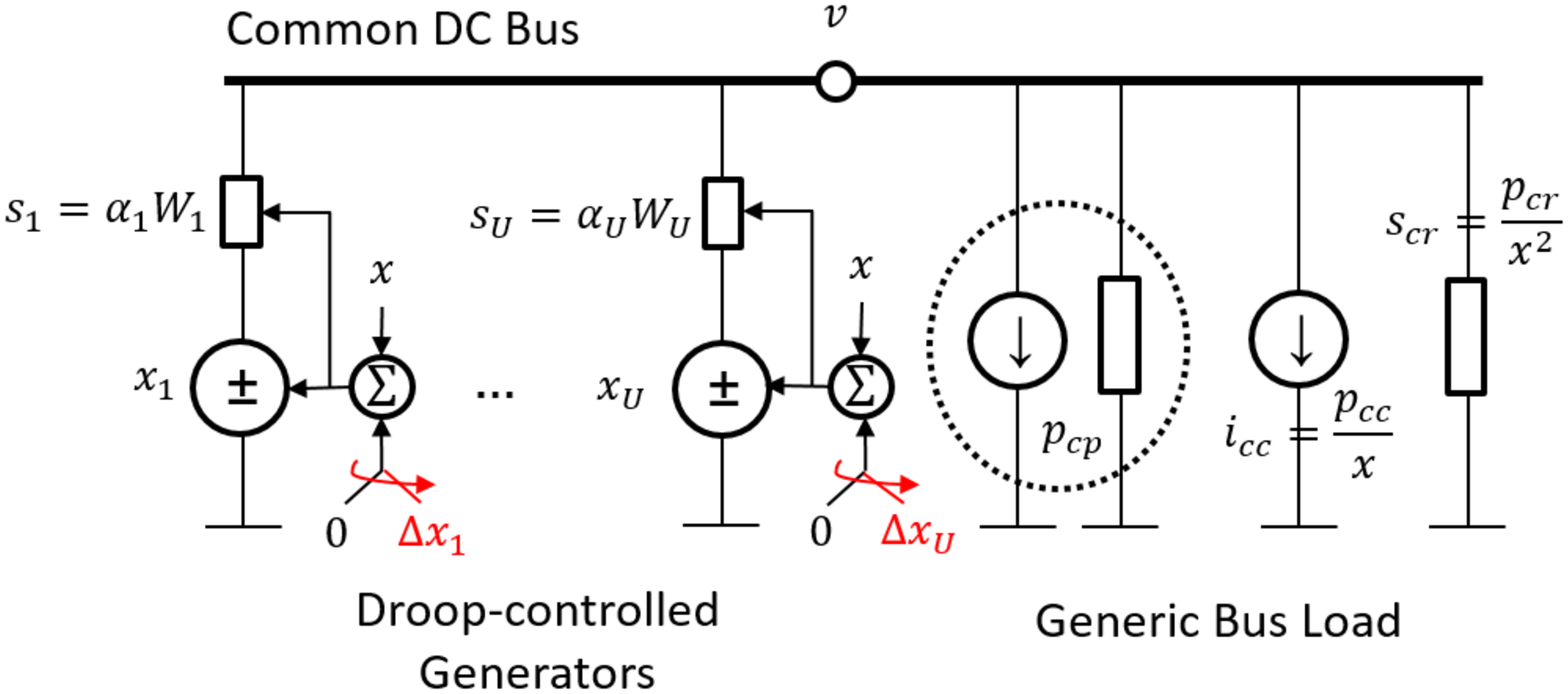}
\caption{Single bus DC MicroGrid in steady state.}
\label{SBMG}
\end{figure}

We focus on a single bus DC MG system (see Fig.~\ref{SBMG}), assuming that all units are connected to a common point, i.e., the \emph{bus}, described by a steady-state voltage $v$.
Note that this is a valid MG model for small, localized systems that span a limited geographical areas and the effect of the distribution lines can be neglected \cite{ref2,ref3,ref11,ref12,ref13}.
We assume that $U$ DER units are interfaced by the bus through power electronic converters.
The primary control executed by the converters is implemented via the \emph{droop control law} that has two controllable parameters: the \emph{reference voltage} and the \emph{virtual admittance}.
In Fig.~\ref{SBMG}, the droop controlled units are modeled as Voltage Source Converters that jointly regulate the bus voltage as follows \cite{ref11,ref12}:
\begin{equation}\label{eq:drooplaw}
	v = x_u + s_u^{-1}i_u,\; u=1,...,U,
\end{equation}
where $x_u$ and $s_u$ denote the reference voltage and virtual admittance, respectively, and $i_u$ is the output current of the unit, $u=1,...,U$.
The power generation ratings of the droop controlled DERs are denoted by $W_u,u=1,...,U$.
In standard applications for single bus systems, the reference voltages $x_u,u=1,...,U$ are set to be equal to the \emph{rated system voltage}, denoted by $x$, while the virtual admittances are set to enable proportional power sharing among the DERs based on their respective current/power ratings:
\begin{equation}\label{eq:virtualadm1}
	s_u  = \frac{i_{u,\max}}{x_u-v_{\min}} = \alpha_u W_u,\;u=1,...,U,	
\end{equation}
where $\alpha_u=((x_u-v_{\min})v_{\min})^{-1}$, $v_{\min}$ is the minimal bus voltage that the system is configured to tolerate, and $i_{u,\max} = {W_uv_{\min}^{-1}}$ is the current rating of the unit whose value corresponds to the power rating for proportional power sharing \cite{ref2,ref11,ref12,ref13}.
The aggregate load hosted by the bus is modeled through a constant admittance $s_{cr}=\frac{p_{cr}}{x^2}$, a constant current $i_{cc}=\frac{p_{cc}}{x}$ and a constant power $p_{cp}$ component; $p_{cr}$, $p_{cc}$ and $p_{cp}$ are the rated power consumptions at voltage $x$.

The behavior of the system in steady-state is governed by the Ohm's and Kirchoff's laws, which for the system shown in Fig.~\ref{SBMG}, under the described primary control configuration, produce the following current balance equation:
\begin{equation}\label{eq:current_balance}
	\sum_{u=1}^U(x_u-v)s_u -v\frac{p_{cr}}{x^2} - \frac{p_{cc}}{x} - \frac{p_{cp}}{v} = 0.
\end{equation}
Replacing $s_u$ with \eqref{eq:virtualadm1} and solving for $v$ yields the following unique solution \cite{ref12}:
\begin{align}\nonumber
	v & = \frac{\sqrt{(\sum_{u=1}^Ux_u\alpha_uW_u - \frac{p_{cc}}{x})^2-4p_{cp}(\sum_{u=1}^U\alpha_uW_u + \frac{p_{cr}}{x^2})}}{2(\sum_{u=1}^U\alpha_uW_u + \frac{p_{cr}}{x^2})}  +\\\label{eq:bus_voltage}
	  & + \frac{\sum_{u=1}^Ux_u\alpha_uW_u - \frac{p_{cc}}{x}}{2(\sum_{u=1}^U\alpha_uW_u + \frac{p_{cr}}{x^2})}.
\end{align}
We observe that the bus voltage $v$ is functionally dependent on the power generation capacities $W_u$, $u=1,...,U$, and the rated power consumptions $p_{cr}$, $p_{cc}$, $p_{cp}$ through the non-linear model \eqref{eq:bus_voltage}.

\subsection{Formulation of the Estimation Problem}

\subsubsection{Parameter Vector}

Every controller $k$, $k=1,...,U$, wants to estimate the generation capacities of the other droop controlled generators $W_u$, $u=1,...,U$, for $u\neq k$, as well as the demand of the bus load, i.e., $p_{cr}$, $p_{cc}$ and $p_{cp}$.
The vector of unknown parameters is:
\begin{align}\label{eq:param_general}
	\boldsymbol{\theta}_k & = [\mathbf{W}_k^T,\;\mathbf{p}_L^T]^T\in\mathbb{R}^{(U+2)\times 1},\;{k=1,...,U},\\\label{eq:Wparam}
	\mathbf{W}_k & = [W_u]_{u=1,...,U,u\neq k}^T\in\mathbb{R}^{(U-1)\times 1},\\\label{eq:plparam}
	\mathbf{p}_L & = [p_{cr},p_{cc},p_{cp}]^T\in\mathbb{R}^{3\times 1},
\end{align}
where $\mathbf{W}_k$ and $\mathbf{p}_L$ are the power generation and load demand vector, respectively.
Since the steady-state bus voltage is a function of $\boldsymbol{\theta}_k$, it is possible for controller $k$ to estimate $\boldsymbol{\theta}_k$ in a decentralized manner using local observations of $v$.
Because \eqref{eq:bus_voltage} is a mapping $v(\boldsymbol{\theta}_k):\mathbb{R}^{(U+2)}\mapsto\mathbb{R}$, the excitation of the state of the system is necessary to ensure \emph{identifiability} of $\boldsymbol{\theta}_k$.
This excitation comes in the form {of \emph{discrete-time training sequences}, embedded in the reference voltage control parameter, as elaborated in the following subsection}\footnote{Section \ref{sec:MLE} formulates the condition that the training sequences need to satisfy for unique identifiability of the parameter vector $\boldsymbol{\theta}_k,k=1,...,U$.}.

\subsubsection{Training Sequences and Measurement Vector}

The \emph{training period} is divided into {$N$ slots of duration $T_S$}.
The controllers are assumed to be {slot-} and training period- synchronized, i.e., their training sequences start at the same slot. 
In slot $n=1,...,N$, each controller changes its steady state operating point by applying small {deviations} on the reference voltage parameters (see Fig.~\ref{SBMG}):
\begin{equation}\label{eq:refvoltagtraining}
		x_{u}[n] = x + \Delta x_u[n],\;n=1,...,N,\;u=1,...,U,
\end{equation}
where the {deviations} $\Delta x_u[n]$ satisfy:
\begin{equation}
		{|\Delta x_u[n]|}\leq \delta x,\;n=1,...,N,\;u=1,...,U,
\end{equation}
and $\delta\ll 1$ is a small positive number, determined by the system application, that limits the amount of bus voltage {ripple}.
In other words, the controllers simultaneously inject training sequences of duration $N$, which are compactly denoted with the vectors $\Delta\mathbf{x}_u = [\Delta x_u[1],\hdots,\Delta x_u[N]]^T\in\mathbb{R}^{N\times 1},\;u=1,...,U$ {and are \emph{known} to \emph{all} controllers.}
Using \eqref{eq:refvoltagtraining} in \eqref{eq:virtualadm1}, the virtual resistances in slot $n$ are:
\begin{equation}\label{eq:virtualadm2}
	s_u[n]  = \frac{v_{\min}^{-1}W_u}{x + \Delta x_u[n] - v_{\min}} = \alpha_u[n] W_u,\;u=1,...,U.
\end{equation}
The reference voltage deviations lead to changes of the bus voltage which are observed and measured by the controllers:
\begin{equation}\label{eq:voltage_dev}
v[n;\boldsymbol{\theta}_k] = \overline{v}(\boldsymbol{\theta}_k) + \Delta v[n;\boldsymbol{\theta}_k],\;n=1,...,N,
\end{equation}
where $v[n;\boldsymbol{\theta}_k]$ is the {bus voltage in slot $n$. 
$\overline{v}(\boldsymbol{\theta}_k)$ is the bus voltage level in the absence of training, which can be calculated from \eqref{eq:bus_voltage} after replacing the reference voltages $x_u,u=1,...,U$ with the rated voltage $x$, while $\Delta v[n;\boldsymbol{\theta}_k]$ is the voltage deviation due to training in slot $n$.}
It is important to note that condition \eqref{eq:virtualadm2} guarantees that the output powers of the units will not violate the ratings $W_u$ and the bus voltage $v[n]$ will not drop below $v_{\min}$ as long as $\delta\leq 1 - xv_{\min}^{-1}$.

In the following, we {omit the explicit dependence of the bus voltage on $\boldsymbol{\theta}_k$} as it is clear from the context.
To obtain $v[n]$, controller $k$ samples the bus voltage with frequency $f_S$.
{The duration $T_S$ complies with the bandwidth limits of the primary control channel and allows the bus to reach a steady state in time $\tau$ where $0<\frac{\tau}{T_S} \ll 1$.}
The controller averages $(T_S-\tau)f_S$ bus voltage samples at the end of each slot, acquired in the steady state period, to obtain the noisy measurement:
\begin{equation}\label{eq:observation}
	\tilde{v}_k[n] = v[n] + z_k[n],\;k=1,...,U,\;u=1,...,U,
\end{equation}
where $z_k[n]$ is the noise term.
In vector notation:
\begin{align}\label{eq:obsvector}
\tilde{\mathbf{v}}_k & = \mathbf{v} + \mathbf{z}_k,\;k=1,...,U,
\end{align}
where $\tilde{\mathbf{v}}_k = [\tilde{v}_k[1],\hdots,\tilde{v}_k[N]]^T\in\mathbb{R}^{N\times 1}$, ${\mathbf{v}} = [{v}[1],\hdots,{v}[N]]^T$ $ \in\mathbb{R}^{N\times 1}$ and $\mathbf{z}_k = [z_k[1],\hdots,z_k[N]]^T\in\mathbb{R}^{N\times 1}$.
The central problem is to obtain an estimate of $\boldsymbol{\theta}_k$, denoted with $\hat{\boldsymbol{\theta}}_k$, using only $\tilde{\mathbf{v}}_k$ and the non-liner parametric model \eqref{eq:bus_voltage}.

\section{Maximum Likelihood Estimator}
\label{sec:MLE}

As each voltage entry in $\tilde{\mathbf{v}}_k$ is obtained by averaging, the noise is assumed to {follow Gaussian distribution}, $\mathbf{z}_k\sim\mathcal{N}(\mathbf{0},\sigma^2\mathbf{I}_N)$ \cite{ref23}.
The log-likelihood function of $\boldsymbol{\theta}_k$, for a given $\tilde{\mathbf{v}}_k$ is:
\begin{equation}\label{eq:loglikelihood}
	\mathcal{L}(\boldsymbol{\theta}_k|\tilde{\mathbf{v}}_k) \sim -(\tilde{\mathbf{v}}_k - {\mathbf{v}}_k)^T(\tilde{\mathbf{v}}_k - {\mathbf{v}}_k) \geq 0.
\end{equation}
The Maximum Likelihood Estimator (MLE) is defined as:
\begin{equation}\label{eq:MLE_general}
	\hat{\boldsymbol{\theta}}_{k,ML} = \min_{\boldsymbol{\theta}_k}\left\{-\mathcal{L}(\boldsymbol{\theta}_k|\tilde{\mathbf{v}}_k)\right\}.
\end{equation}
To avoid solving a non-linear optimization problem, with possibly non-convex objective function over high-dimensional parameter space, we establish Theorem \ref{theorem1}.
For this purpose, we introduce the following matrix:
\begin{align}\label{eq:elemH}
	\tilde{\mathbf{H}}_k & = [\tilde{\mathbf{\Pi}}_k,\;\tilde{\mathbf{\Xi}}_k]\in\mathbb{R}^{N\times(U+2)},\;k=1,...,U
\end{align}
where
\begin{align}\label{eq:elemPi}
	\tilde{\mathbf{\Pi}}_k & = \text{diag}(\tilde{\mathbf{v}}_k)\tilde{\mathbf{\Delta}}_k\in\mathbb{R}^{N\times(U-1)},\\
	\tilde{\mathbf{\Delta}}_k & = [\boldsymbol{\alpha}_u\circ(\tilde{\mathbf{v}}_k - x\mathbf{1}_N - \Delta\mathbf{x}_u)]_{u=1,...,U,u\neq k} \in\mathbb{R}^{N\times(U-1)}, \\\label{eq:elemalpha}
	\boldsymbol{\alpha}_u & = [\alpha_u[1],\hdots,\alpha_u[N]]^T\in\mathbb{R}^{N\times 1}\\\label{eq:elemXi}
	\tilde{\mathbf{\Xi}}_k & = {[x^{-2}\tilde{\mathbf{v}}_k\circ\tilde{\mathbf{v}}_k,\;x^{-1}\tilde{\mathbf{v}}_k,\;\mathbf{1}_N]} \in\mathbb{R}^{N\times 3},
\end{align}
where $\circ$ denotes the Hadamard product.
Then, we establish the following condition.

\begin{definition}({Sufficient Excitation}).
The matrix $\tilde{\mathbf{H}}_k$ has full column rank, i.e., $\text{\emph{rank}}(\tilde{\mathbf{H}}_k)=U+2$.
\end{definition}

We are now ready to state the following result:
\begin{theorem}\label{theorem1}
If Condition 1 holds, then the unique global minimizer of \eqref{eq:MLE_general} is the following Least Squares solution:
\begin{equation}\label{eq:MLE_sol}
	 \hat{\boldsymbol{\theta}}_{k,ML} = (\tilde{\mathbf{H}}_k^T\tilde{\mathbf{H}}_k)^{-1}\tilde{\mathbf{H}}_k^T\tilde{\boldsymbol{\pi}}_kW_k,
\end{equation}
where
\begin{align}\label{eq:elempi}
\tilde{\boldsymbol{\pi}}_k = -\text{diag}(\tilde{\mathbf{v}}_k)(\boldsymbol{\alpha}_k\circ(\tilde{\mathbf{v}}_k - x\mathbf{1}_N - \Delta\mathbf{x}_k))\in\mathbb{R}^{N\times 1},
\end{align}
and $\boldsymbol{\theta}_k$ and $\tilde{\mathbf{H}}_k$ are given with \eqref{eq:param_general} and \eqref{eq:elemH}, respectively.
\end{theorem}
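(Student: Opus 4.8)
The plan is to convert the nonlinear output-error objective in \eqref{eq:MLE_general} into a linear least-squares problem whose coefficient matrix is exactly $\tilde{\mathbf{H}}_k$, and then to use Condition~1 to certify a unique global optimum. First I would rewrite the steady-state current balance \eqref{eq:current_balance} per slot, i.e. with $x_u$ replaced by $x+\Delta x_u[n]$ and $s_u$ replaced by $\alpha_u[n]W_u$ from \eqref{eq:virtualadm2}, so that the admissible voltage $v[n]$ is the unique solution given by \eqref{eq:bus_voltage}. Multiplying that balance relation by $v[n]$ clears the $p_{cp}/v[n]$ term and produces a relation that is polynomial in $v[n]$ but \emph{linear} in the unknowns $W_u$ ($u\neq k$) and $p_{cr},p_{cc},p_{cp}$. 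Isolating the single term that carries the \emph{known} own-capacity $W_k$ on one side and collecting the remaining terms into $\boldsymbol{\theta}_k$ of \eqref{eq:param_general}, I obtain, slot by slot, the identity $\mathbf{H}_k\,\boldsymbol{\theta}_k=\boldsymbol{\pi}_k W_k$, in which $\mathbf{H}_k$ and $\boldsymbol{\pi}_k$ are precisely the matrices \eqref{eq:elemH} and \eqref{eq:elempi} built from the noiseless voltages. Matching the columns $\tilde{\mathbf{\Pi}}_k,\tilde{\mathbf{\Xi}}_k$ and the vector $\tilde{\boldsymbol{\pi}}_k$ to the individual physical terms of the balance equation is the technical heart of the derivation.

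Next I would characterize the global optimum through its residual. The objective $-\mathcal{L}=\|\tilde{\mathbf{v}}_k-\mathbf{v}(\boldsymbol{\theta}_k)\|^2$ is nonnegative and vanishes if and only if $\mathbf{v}(\boldsymbol{\theta}_k)=\tilde{\mathbf{v}}_k$. Since \eqref{eq:bus_voltage} is the unique admissible root of the per-slot balance equation, $\mathbf{v}(\boldsymbol{\theta}_k)=\tilde{\mathbf{v}}_k$ holds exactly when each measured sample $\tilde{v}_k[n]$ itself satisfies that balance equation for the candidate $\boldsymbol{\theta}_k$. Substituting $\tilde{v}_k[n]$ into the coefficients of the identity from the previous step turns this condition into the linear system $\tilde{\mathbf{H}}_k\boldsymbol{\theta}_k=\tilde{\boldsymbol{\pi}}_k W_k$. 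Consequently any solution of this linear system drives $-\mathcal{L}$ to its global minimum value $0$, which is what lets us replace the a priori non-convex output-error minimization by a consistent linear system.

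Finally, under Condition~1 the matrix $\tilde{\mathbf{H}}_k$ has full column rank, so the associated normal equations $\tilde{\mathbf{H}}_k^T\tilde{\mathbf{H}}_k\,\boldsymbol{\theta}_k=\tilde{\mathbf{H}}_k^T\tilde{\boldsymbol{\pi}}_k W_k$ have the unique solution \eqref{eq:MLE_sol}. Full column rank simultaneously excludes any other parameter vector from attaining the zero residual, so \eqref{eq:MLE_sol} is the \emph{unique} global minimizer, which completes the argument for Theorem~\ref{theorem1}.

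The step I expect to be the main obstacle is the reconciliation, in the second paragraph, between the output-error objective that \emph{defines} the MLE and the equation-error linear system that \eqref{eq:MLE_sol} actually solves. One must argue carefully that the attainable minimum of $-\mathcal{L}$ is genuinely $0$, i.e. that $\tilde{\boldsymbol{\pi}}_k W_k$ lies in the range of $\tilde{\mathbf{H}}_k$ so the linear system is consistent, and that the uniqueness of the root \eqref{eq:bus_voltage} forces the zero sets of the two residuals to coincide, ruling out a spurious minimizer of the linear problem that fails to reproduce the measurements. Consistency is immediate when $N=U+2$, where $\tilde{\mathbf{H}}_k$ is square and invertible under Condition~1; the over-determined case $N>U+2$ is the delicate one, and it is there that the structure imposed on the data by the training design, captured through Condition~1, must be used most carefully.
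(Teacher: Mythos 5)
Your proposal follows essentially the same route as the paper's proof: take the zero-residual point $\mathbf{v}(\boldsymbol{\theta}_k)=\tilde{\mathbf{v}}_k$ as the global minimizer, substitute the measurements into the voltage-multiplied current-balance equation to obtain the linear system $\tilde{\mathbf{H}}_k\boldsymbol{\theta}_k=\tilde{\boldsymbol{\pi}}_kW_k$, and invoke Condition 1 to get the unique least-squares solution \eqref{eq:MLE_sol}. The consistency concern you flag for the over-determined case $N>U+2$ is a genuine one, but the paper's own proof does not resolve it either---it simply asserts the trivial stationary point and identifies the equation-error least-squares solution with the output-error MLE---so your treatment is no less complete than the original.
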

\begin{proof}
Any solution of the MLE problem \eqref{eq:MLE_general} must satisfy the stationary point condition:
\begin{equation}\label{proof_step1}
\frac{\partial \mathcal{L}(\boldsymbol{\theta}_k|\tilde{\mathbf{v}}_k)}{\partial\boldsymbol{\theta}_k} = -\frac{1}{\sigma^2}(\tilde{\mathbf{v}}_k - {\mathbf{v}}_k)^T\frac{\partial \mathbf{v}_k}{\partial\boldsymbol{\theta}_k}=\mathbf{0}.
\end{equation}
The trivial solution to \eqref{proof_step1} is $\tilde{\mathbf{v}}_k - {\mathbf{v}}_k=\mathbf{0}$, or $v[n]=\tilde{v}_k[n],n=1,...,N$.
Plugging the solution in \eqref{eq:current_balance} and multiplying on both sides with $\tilde{v}_k[n]\neq 0$, gives the power balance equation:
\begin{equation}\label{proof_step2}
\tilde{v}_k^2[n](\sum_{u=1}^U\alpha_u[n]W_u + \frac{p_{cr}}{x^2}) - \tilde{v}_k[n](\sum_{u=1}^U\alpha_u[n]x_u[n]W_u - \frac{p_{cc}}{x}) + p_{cp} = 0.
\end{equation}
Reorganizing \eqref{proof_step2} in matrix form, produces the linear system:
\begin{equation}\label{proof_step3}
\tilde{\mathbf{H}}_k\boldsymbol{\theta}_k = \tilde{\boldsymbol{\pi}}_kW_k,
\end{equation}
where $\tilde{\mathbf{H}}_k$ and $\tilde{\boldsymbol{\pi}}_k$ are given with \eqref{eq:elemH} and \eqref{eq:elempi}, respectively.
If $(\tilde{\mathbf{H}}_k^T\tilde{\mathbf{H}}_k)^{-1}$ exists, then the unique solution to \eqref{proof_step3} in least squares sense is \eqref{eq:MLE_sol}.
Moreover, \eqref{eq:MLE_sol} is the global minimizer of \eqref{eq:MLE_general} since $\mathcal{L}(\boldsymbol{\theta}_k|\tilde{\mathbf{v}}_k)=0$ when $\mathbf{v}_k=\tilde{\mathbf{v}}_k$.
\end{proof}

In practice, the sufficient excitation condition will hold if the training sequences are properly designed.
Specifically, they should satisfy: 1) $N\geq U+2$, i.e., the duration of the excitation should be at least as long as the number of estimated parameters, and, 2) the sequences should be linearly independent.
The linear independence can be achieved by choosing the training sequences from a well known orthogonal codes such as Walsh, Gold, Hadamard etc.

\begin{figure*}[t]
\centering
\includegraphics[scale=0.47]{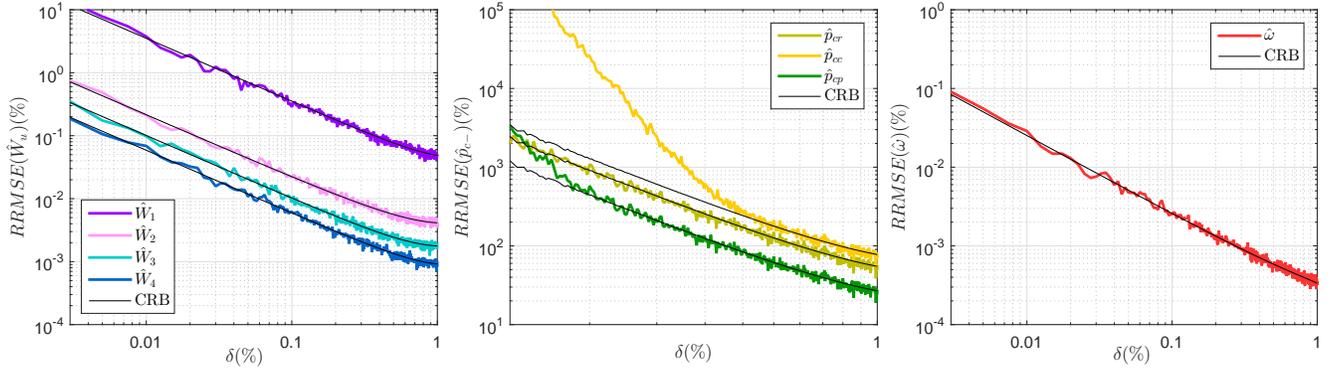}\hfill
\caption{{Performance of the MLE for generation capacity and load estimation.}}
\label{results}
\end{figure*}

\textbf{Estimating the Load Consumption}:
In many monitoring and control applications, detailed knowledge of the load components is not necessary, i.e., the aggregate information about the total power demand is sufficient \cite{ref7,ref10}.
Therefore, at this point we consider estimating only the total load demand $p_L=p_{cr}+p_{cc}+p_{cp}$.
Multiplying both sides of \eqref{eq:current_balance} by $v[n]$ and replacing $v[n]$ with \eqref{eq:voltage_dev} yields:
\begin{align}\label{proof_step21}
& {v}^2[n]\sum_{u=1}^U\alpha_u[n]W_u - {v}[n]\sum_{u=1}^U\alpha_u[n]x_u[n]W_u + p_L^{\star}[n] = 0,\\\label{proof_step22}
 & p_L^{\star}[n] =\omega + \chi\Delta v[n] + \zeta\Delta v^2[n],\\\label{proof_step23}
 & \omega  = \frac{\overline{v}^2}{x^2}p_{cr}+\frac{\overline{v}}{x}p_{cc}+p_{cp},\;\chi = 2\overline{v}\frac{p_{cr}}{x^2}+\frac{p_{cc}}{x},\;\zeta=\frac{p_{cr}}{x^2},
\end{align}
where $p_L^{\star}[n]$ is the total power consumed by the load in slot $n$, consisting of (i) $\omega$, which is the total power consumed at bus voltage $\overline{v}$, and (ii) the resistive and constant current components described by parameters $\chi$ and $\zeta$, respectively.
In practice, $\overline{v}$ is tightly regulated around the rated voltage $x$ \cite{ref6,ref2}, which makes $\omega$ a good approximation of the total power demand, i.e, $\omega\approx p_L$.
We define the transformed parameter vector:
\begin{align}\label{eq:param_vector2}
\boldsymbol{\theta}_k^{\star} & = [\mathbf{W}_k^T,\;\mathbf{p}_L^{\star T}]^T\in\mathbb{R}^{(U+2)\times 1},\;k=1,...,U,\\\label{eq:load_vector}
\mathbf{p}_L^{\star} & = [\omega,\;\chi,\;\zeta]^T\in\mathbb{R}^{3\times 1},
\end{align}
where $\mathbf{p}_L^{\star}$ is the power consumption vector and $\mathbf{W}_k$ is given with \eqref{eq:Wparam}; the map $\boldsymbol{\theta}_k^{\star}(\boldsymbol{\theta}_k):\mathbb{R}^{(U+2)\times 1}\mapsto\mathbb{R}^{(U+2)\times 1}$ is bijection.
Theorem \ref{theorem1} can then be restated for $\boldsymbol{\theta}_k^{\star}$, using \eqref{proof_step21} with the matrix $\tilde{\mathbf{H}}_k$ obtaining the following form:
\begin{equation}
	\tilde{\mathbf{H}}_k^{\star} = [\tilde{\mathbf{\Pi}}_k,\;\mathbf{1}_N,\;\Delta\tilde{\mathbf{v}}_k,\;\Delta\tilde{\mathbf{v}}_k\circ\Delta\tilde{\mathbf{v}}_k]\in\mathbb{R}^{N\times(U+2)},
\end{equation}
where $\Delta\tilde{\mathbf{v}}_k = \tilde{\mathbf{v}}_k - \overline{v}\mathbf{1}_N$.

\textbf{Estimation Error}:
To characterize the statistical performance of the MLE \eqref{eq:MLE_sol}, we evaluate the Cramer-Rao Bound (CRB) on the Mean Squared Error (MSE) matrices $\textbf{MSE}(\hat{\boldsymbol{\theta}}_k)$ and $\textbf{MSE}(\hat{\boldsymbol{\theta}}_k^{\star})$ \cite{ref24}.
Omitting the derivation, we state only the final result:
\begin{align}\label{eq:CRB1}
 \textbf{{MSE}}(\hat{\boldsymbol{\theta}}_k) \succeq \bigg(\sum_{n=1}^N\frac{\sigma^2}{\lambda^2[n]}\mathbf{q}_k[n]\mathbf{q}_k^T[n]\bigg)^{-1},
\end{align}
where the scalar $\lambda[n]$ and $\mathbf{q}_k[n]\in\mathbb{R}^{(U+2)\times 1}$ can be calculated as:
\begin{align}\nonumber
\lambda[n] & =\sum_{u=1}^U(2v[n] - x_u[n])\alpha_u[n]W_u + 2v[n]\frac{p_{cr}}{x^2}-\frac{p_{cc}}{x},\\\nonumber
\mathbf{q}_k[n] & = \bigg[...,\alpha_u[n]v[n](v[n]-x_u[n]),...,\frac{v[n]^2}{x^2},\frac{v[n]}{x},1\bigg]_{u\neq k}^T.
\end{align}
Similarly, the MSE matrix of $\hat{\boldsymbol{\theta}}_k^{\star}$, is lower bounded as:
\begin{align}\label{eq:CRB4}
 \textbf{{MSE}}(\hat{\boldsymbol{\theta}}_k^{\star}) \succeq \bigg(\sum_{n=1}^N\frac{\sigma^2}{\lambda^2[n]}(\mathbf{q}_k^T[n]\nabla_{\boldsymbol{\theta}_k}^{-1}\boldsymbol{\theta}_k^{\star})^T\mathbf{q}_k^T[n]\nabla_{\boldsymbol{\theta}_k}^{-1}\boldsymbol{\theta}_k^{\star}\bigg)^{-1},
\end{align}
where $\nabla_{\boldsymbol{\theta}_k}\boldsymbol{\theta}_k^{\star}$ is the Jacobian of $\boldsymbol{\theta}_k^{\star}$ with respect to $\boldsymbol{\theta}_k$.

\section{Numerical Evaluation}
\label{sec:Results}

We test the performance of the MLE in a system with $U=5$ DERs with the following generation capacities: $W_1=0.1\,\text{kW}$, $W_2=1\,\text{kW}$, $W_3=2\,\text{kW}$, $W_4=4\,\text{kW}$ and $W_5=15\,\text{kW}$.
The load components are fixed to the values $p_{cr}=3.5\,\text{kW}$, $p_{cc}=2.5\,\text{kW}$ and $p_{cp}=5\,\text{kW}$.
For brevity, we focus only on the MLE performed by controller $5$.
The rated voltage of the MG is $x=400\,\text{V}$ and $v_{min}=390\,\text{V}$.
The noise variance $\sigma^2$ is calculated as $\sigma^2 = \varphi^2((T_S-\tau)f_S)^{-1}$ where $\varphi$ is the sampling noise variance of the converter ADC.
In our evaluations $\varphi = 0.01\,\text{V/sample}$ \cite{ref22,ref23}, {$T_S-\tau=50\,\text{ms}$} and $f_S = 10\,\text{kHz}$.
{For illustration, we fix the number of slots to the lower limit $N=7$ which is necessary for Condition 1 to hold.}
{To ensure that Condition 1 is fully satisfied, we use binary orthogonal Hadamard training sequences with amplitude $\Delta x_u[n]\in\left\{-\delta x,+\delta x\right\},u=1,...,5$} and vary the value of $\delta\in[0.01\%,1\%]$.
The performance metric is the Relative Root Mean Squared Error.

Fig.~\ref{results} shows the performance of \eqref{eq:MLE_sol} for the generation capacities $W_u,u=1,2,3,4$ {as function of $\delta$}.
The low value of the estimator variance indicates that the DERs' generation capacities can be identified with practically negligible estimation error with quite short training sequences and relatively small voltage deviation amplitudes.
Fig.~\ref{results} also shows the performance of the load estimators.
Although the individual components of the load are identifiable and the estimators $\hat{p}_{cr}$, $\hat{p}_{cc}$ and $\hat{p}_{cp}$ are unbiased, the variance might be unsatisfactory for some applications.
The reason for this behavior lies in the fact that the load is a passive component and it does not transmit training sequences which makes the identification more uncertain.
However, as in many applications only the total load is of practical interest \cite{ref7,ref10}, Fig.~\ref{results} also depicts the performance of the estimator of the total load consumption $\hat{\omega}$.
Similarly to the case of the generation capacities, the results show promising performance since the controllers are capable to identify the load consumption with uncertainty lower than $0.1\%$ of the true value, with $N=7$ and {$\delta<0.5\%$ of the rated MG voltage $x$}.
Taking also into account the simplicity of \eqref{eq:MLE_sol}, these results highlight the great practical potential of the proposed approach.

 
\section{Ongoing and Future Work}
\label{sec:conc}

Our on-going work focuses on the analysis of the performance of the estimator for various different training protocols under practical system constraints on both the voltage ripple and the training length.
We are also working on extension of the approach for the practical case of multiple-bus DC/AC MG systems, as well as extension to other applications such as topology identification.
The CRB analysis shows highly encouraging results with promising practical outlook.

\vfill\pagebreak

\bibliographystyle{IEEEbib}
\bibliography{strings,refs}

\end{document}